
\documentclass{article}

\usepackage{microtype}
\usepackage{graphicx}
\usepackage{subfigure}
\usepackage{booktabs} 
\usepackage{xcolor}
\usepackage{amsthm}
\usepackage{multirow}

\usepackage{hyperref}




\usepackage{amsmath,amsfonts,bm,amssymb}

\newtheorem{theorem}{Theorem}
\newtheorem{definition}{Definition}









\def\eqref#1{equation~\ref{#1}}









\def\1{\bm{1}}








\def\vtheta{{\bm{\theta}}}
\def\va{{\bm{a}}}

\def\vg{{\bm{g}}}
\def\vh{{\bm{h}}}

\def\vq{{\bm{q}}}
\def\vr{{\bm{r}}}

\def\vz{{\bm{z}}}
\def\vpi{{\bm{\pi}}}
\def\vomega{{\bm{\omega}}}

\def\vtheta{{\bm{\theta}}}
\def\vphi{{\bm{\phi}}}



\DeclareMathAlphabet{\mathsfit}{\encodingdefault}{\sfdefault}{m}{sl}
\SetMathAlphabet{\mathsfit}{bold}{\encodingdefault}{\sfdefault}{bx}{n}

\def\gA{{\mathcal{A}}}

\def\gD{{\mathcal{D}}}

\def\gP{{\mathcal{P}}}

\def\gS{{\mathcal{S}}}










\newcommand{\E}{\mathbb{E}}

\newcommand{\R}{\mathbb{R}}

\newcommand{\KL}{D_{\mathrm{KL}}}



\DeclareMathOperator*{\argmax}{arg\,max}


\newcommand{\PL}{\mathrm{PL}}
\newcommand{\LSBRE}{\mathrm{LSBRE}}
\newcommand{\expreturn}{\mathrm{ExpRet}}

\usepackage[accepted]{icml2019}

\icmltitlerunning{Multi-Agent Adversarial Inverse Reinforcement Learning}

\begin{document}

\twocolumn[
\icmltitle{Multi-Agent Adversarial Inverse Reinforcement Learning}




\begin{icmlauthorlist}
\icmlauthor{Lantao Yu}{st}
\icmlauthor{Jiaming Song}{st}
\icmlauthor{Stefano Ermon}{st}
\end{icmlauthorlist}

\icmlaffiliation{st}{Department of Computer Science, Stanford University, Stanford, CA 94305 USA}

\icmlkeywords{Machine Learning, ICML}

\vskip 0.3in
]


\icmlcorrespondingauthor{Lantao Yu}{lantaoyu@cs.stanford.edu}
\icmlcorrespondingauthor{Stefano Ermon}{ermon@cs.stanford.edu}
\printAffiliationsAndNotice{}  


\begin{abstract}
Reinforcement learning agents are prone to undesired behaviors due to reward mis-specification. Finding a set of reward functions to properly guide agent behaviors is particularly challenging in multi-agent scenarios. 
Inverse reinforcement learning provides a framework to automatically acquire suitable reward functions from expert demonstrations. Its extension to multi-agent settings, however, is difficult due to the more complex notions of rational behaviors.
In this paper, we propose MA-AIRL, a new framework for multi-agent inverse reinforcement learning, which is effective and scalable for Markov games with high-dimensional state-action space and unknown dynamics. We derive our algorithm based on a new solution concept and maximum pseudolikelihood estimation within an adversarial reward learning framework. In the experiments, we demonstrate that MA-AIRL can recover reward functions that are highly correlated with ground truth ones, and significantly outperforms prior methods in terms of policy imitation.
\end{abstract}

\section{Introduction}
Reinforcement learning (RL) is a general and powerful framework for decision making under uncertainty. Recent advances in deep learning have enabled a variety of RL applications such as games \cite{silver2016mastering, mnih2015human}, robotics \cite{gu2017deep,levine2016end}, automated machine learning \cite{zoph2016neural} and generative modeling \cite{yu2017seqgan}. RL algorithms are also showing promise in multi-agent systems, where multiple agents interact with each other, such as multi-player games~\citep{peng2017multiagent}, social interactions~\cite{leibo2017multi} and multi-robot control systems~\citep{matignon2012coordinated}. 
However, the success of RL crucially depends on careful reward design~\citep{amodei2016concrete}. 
As reinforcement learning agents are prone to undesired behaviors due to reward mis-specification~\citep{amodei2016faulty}, designing suitable reward functions can be challenging in many real-world applications~\citep{hadfield2017inverse}. In multi-agents systems, since different agents may have completely different goals and state-action representations, hand-tuning reward functions becomes increasingly more challenging as we take more agents into consideration. 

Imitation learning presents a direct approach to programming agents with expert demonstrations, where agents learn to produce behaviors similar to the demonstrations. However, imitation learning algorithms, such as behavior cloning \cite{pomerleau1991efficient} and generative adversarial imitation learning \cite{ho2016generative,ho2016model}, typically sidestep the problem of interring an explicit representation for the underlying reward functions.
Because the reward function is often considered as the most succinct, robust and transferable representation of a task~\citep{abbeel2004apprenticeship,fu2017learning}, it is important to consider the problem of inferring reward functions from expert demonstrations, which we refer to as inverse reinforcement learning (IRL). 
IRL can offer many advantages compared to direct policy imitation, such as analyzing and debugging an imitation learning algorithm, inferring agents' intentions and re-optimizing rewards in new environments~\cite{ng2000algorithms}.

However, IRL is ill-defined, as many policies can be optimal for a given reward
and many reward functions can explain a set of demonstrations. 
Maximum entropy inverse reinforcement learning (MaxEnt IRL) \cite{ziebart2008maximum} provides a general probabilistic framework to solve the ambiguity by finding the trajectory distribution with maximum entropy that matches the reward expectation of the experts. As MaxEnt IRL requires solving an integral over all possible trajectories for computing the partition function, it is only suitable for small scale problems with known dynamics. Adversarial IRL \cite{finn2016connection,fu2017learning} scales MaxEnt IRL to large and continuous problems by drawing an analogy between a sampling based approximation of MaxEnt IRL and Generative Adversarial Networks~\cite{goodfellow2014generative} with a particular discriminator structure. However, the approach is restricted to single-agent settings.

In this paper, we consider the IRL problem 
in multi-agent environments with high-dimensional continuous state-action space and unknown dynamics.
Generalizing MaxEnt IRL and Adversarial IRL to multi-agent systems is challenging. Since each agent's optimal policy depends on other agents' policies, the notion of optimality, central to Markov decision processes, has to be replaced by an appropriate equilibrium solution concept. Nash equilibrium \citep{hu1998multiagent} is the most popular solution concept for multi-agent RL, where each agent's policy is the best response to others.
However, Nash equilibrium is incompatible with MaxEnt RL in the sense that it assumes the agents never take sub-optimal actions.
Thus imitation learning and inverse reinforcement learning methods based on Nash equilibrium or correlated equilibrium \cite{aumann1974subjectivity} might lack the ability to handle irrational (or computationally bounded) experts. 

In this paper, inspired by logistic quantal response equilibrium \cite{mckelvey1995quantal,mckelvey1998quantal} and Gibbs sampling \cite{hastings1970monte}, we propose a new solution concept termed logistic stochastic best response equilibrium (LSBRE), which allows us to characterize the trajectory distribution induced by parameterized reward functions and handle the bounded rationality of expert demonstrations in a principled manner. 
Specifically, by uncovering the close relationship between LSBRE and MaxEnt RL, and bridging the optimization of joint likelihood and conditional likelihood with \textit{maximum pseudolikelihood estimation}, we propose Multi-Agent Adversarial Inverse Reinforcemnt Learning (MA-AIRL), a novel MaxEnt IRL framework for Markov games. MA-AIRL is effective and scalable to large high-dimensional Markov games with unknown dynamics, which are not amenable to previous methods relying on tabular representation and linear or quadratic programming \cite{natarajan2010multi,waugh2013computational,lin2014multi,lin2018multi}.
We experimentally demonstrate that MA-AIRL is able to recover reward functions that are highly correlated to the ground truth rewards, while simultaneously learning policies that significantly outperform state-of-the-art multi-agent imitation learning algorithms~\citep{song2018multi} in mixed cooperative and competitive tasks~\citep{lowe2017multi}.
\section{Preliminaries}
\subsection{Markov Games}

Markov games~\citep{littman1994markov} are generalizations of Markov decision processes (MDPs) to the case of $N$ 
interacting agents. 
A \emph{Markov game} $(\gS, \gA, P, \eta, \vr)$ is defined via a set of states $\gS$, and $N$ sets of actions $\{\gA_i\}_{i=1}^{N}$. The function $P: \gS \times \gA_1 \times \cdots \times \gA_N \to \gP(\gS)$ describes the (stochastic) transition process between states, where $\gP(\gS)$ denotes the set of probability distributions over the set $\gS$. 
Given that we are in state $s^t$ at time $t$ and the agents take actions $(a_1, \ldots, a_N)$, the state transitions to $s^{t+1}$ with probability $P(s^{t+1} | s^t, a_1, \ldots, a_N)$.
Each agent $i$ obtains a (bounded) reward given by a function $r_i: \gS \times \gA_1 \times \cdots \times \gA_N \to \R$. The function $\eta \in \gP(\gS)$ specifies the distribution of the initial state. 
We use bold variables without subscript $i$ to denote the concatenation of all variables for all agents
(\emph{e.g.}, $\vpi$ denotes the joint policy, $\vr$ denotes all rewards and $\va$ denotes actions of all agents in a multi-agent setting).
We use subscript $-i$ to denote \emph{all agents except for $i$}. For example, $(a_i, \va_{-i})$ represents $(a_1, \ldots, a_N)$, the actions of all $N$ agents.
The objective of each agent $i$ is to maximize its own expected return (\emph{i.e.}, the expected sum of discounted rewards) $\E_{\vpi}\left[\sum_{t=1}^{T} \gamma^t r_{i,t}\right]$, where $\gamma$ is the discount factor and $r_{i, t}$ is the reward received $t$ steps into the future. 
Each agent achieves its own objective by selecting actions through a stochastic policy
$\pi_i: \gS \to  \gP(\gA_i)$.
Depending on the context, the policies can be Markovian (\emph{i.e.}, depend only on the state) or require additional coordination signals.
For each agent $i$, we further define the expected return for a state-action pair as:
$
\expreturn_i^{\pi_i, \vpi_{-i}} (s_t, \va_t) = \mathbb{E}_{s^{t+1:T}, \va^{t+1:T}}\left[\sum_{l \geq t} \gamma^{l-t} r_i(s^l, \va^l)|s_t, \va_t, \vpi \right]
$

\subsection{Solution Concepts for Markov Games}\label{sec:optimality-notion}
A correlated equilibrium (CE) for a Markov game \cite{ziebart2011maximum} is a joint strategy profile, where no agent can achieve higher expected reward through unilaterally changing its own policy.
CE first introduced by \cite{aumann1974subjectivity,aumann1987correlated} is a more general solution concept than the well-known Nash equilibrium (NE)~\citep{hu1998multiagent}, which further requires agents' actions in each state to be independent, \emph{i.e.} $\vpi(\va|s) = \Pi_{i=1}^N \pi_i(a_i|s)$. 
It has been shown that many decentralized, adaptive strategies will converge to CE instead of a more restrictive equilibrium such as NE \cite{gordon2008no, hart2000simple}.
To take bounded rationality into consideration, \cite{mckelvey1995quantal,mckelvey1998quantal} further propose logistic quantal response equilibrium (LQRE) as a stochastic generalization to NE and CE.
\begin{definition}\label{def:lqre}
A logistic quantal response equilibrium for Markov game corresponds to any strategy profile satisfying a set of constraints, where for each state and action, the constraint is given by:
\begin{gather*}
\pi_i(a_{i} | s) = \frac{ \exp\left( \lambda \expreturn^{\vpi}_{i}(s, a_i, \va_{-i}) \right)}{\sum_{a'_{i}} \exp\left( \lambda \expreturn^{\vpi}_{i}(s, a'_i, \va_{-i})\right)}
\end{gather*}
\end{definition}
Intuitively, in LQRE, agents choose actions with higher expected return with higher probability.

\subsection{Learning from Expert Demonstrations}
Suppose we do not have access to the ground truth reward signal $r$, but have demonstrations $\gD$ provided by an expert ($N$ expert agents in Markov games). $\gD$ is a set of trajectories $\{\tau_j\}_{j=1}^{M}$, where $\tau_j = \{(s^t_j, \va^t_j)\}_{t=1}^{T}$ is an expert trajectory collected by sampling $s^1 \sim \eta(s), \va^t \sim \pi_E(\va^t \vert s^t), s^{t+1} \sim P(s^{t+1} \vert s^t, \va^t)$.
$\gD$ contains the entire supervision to the learning algorithm, \emph{i.e.}, we assume we cannot ask for additional interactions with the experts during training. Given $\gD$, imitation learning (IL) aims to directly learn policies that behave similarly to these demonstrations, whereas inverse reinforcement learning (IRL) \cite{russell1998learning, ng2000algorithms} seeks to infer the underlying reward functions which induce the expert policies.

The MaxEnt IRL framework~\citep{ziebart2008maximum} aims to recover a reward function that rationalizes the expert behaviors with the \emph{least commitment}, denoted as $\mathrm{IRL}(\pi_E)$:
\begin{align*}
\mathrm{IRL}(\pi_E) & = \argmax_{r \in \R^{\gS \times \gA}} \E_{\pi_E}[r(s, a)] - \mathrm{RL}(r) \\    
\mathrm{RL}(r) & = \max_{\pi \in \Pi} \mathcal{H}(\pi) + \E_\pi[r(s, a)]
\end{align*}
where $\mathcal{H}(\pi)=\mathbb{E}_{\pi}[-\log \pi(a|s)]$ is the policy entropy.
However, MaxEnt IRL is generally considered less efficient and scalable than direct imitation, as we need to solve a forward RL problem in the inner loop. In the context of imitation learning,~\citep{ho2016generative} proposed to use generative adversarial training \cite{goodfellow2014generative}, to learn the policies characterized by $\mathrm{RL} \circ \mathrm{IRL}(\pi_E)$ directly, leading to the Generative Adversarial Imitation Learning (GAIL) algorithm:
$$
\min_\theta \max_\omega \mathbb{E}_{\pi_E}\left[\log D_\omega(s,a)\right] + \mathbb{E}_{\pi_\theta}\left[\log(1 - D_\omega(s,a))\right]
$$
where $D_\omega$ is a discriminator that classifies expert and policy trajectories, and $\pi_\theta$ is the parameterized policy that tries to maximize its score under $D_\omega$. 
According to~\citet{goodfellow2014generative}, with infinite data and infinite computation, at optimality, the distribution of generated state-action pairs should exactly match the distribution of demonstrated state-action pairs under the GAIL objective.
The downside to this approach, however, is that we bypass the intermediate step of recovering rewards. Specifically, note that we cannot extract reward functions from the discriminator, as $D_\omega(s, a)$ will converge to $0.5$ for all $(s, a)$ pairs.

\subsection{Adversarial Inverse Reinforcement Learning}\label{sec:single-airl}
Besides resolving the ambiguity that many optimal rewards can explain a set of demonstrations, another advantage of MaxEnt IRL is that it can be interpreted as solving the following maximum likelihood estimation (MLE) problem:
\begin{align}
    & p_\omega(\tau) \propto \left[\eta(s^1) \prod_{t=1}^T P(s^{t+1}|s^t,a^t)\right] \exp\left( \sum_{t=1}^T r_\omega (s^t, a^t) \right) \label{eq:irl-mle}\\
    & \max_\omega \mathbb{E}_{\pi_E}\left[\log p_\omega(\tau)\right] = \mathbb{E}_{\tau \sim \pi_E}\left[ \sum_{t=1}^{T} r_\omega(s^t,a^t) \right] - \log Z_\omega \nonumber
\end{align}

Here, $\omega$ are the parameters of the reward function and $Z_\omega$ is the partition function, \emph{i.e.} an integral over all possible trajectories consistent with the environment dynamics. $Z_\omega$ is intractable to compute when the state-action spaces are large or continuous, and the environment dynamics are unknown.

Combining Guided Cost Learning (GCL) \cite{finn2016guided} and generative adversarial training, \citeauthor{finn2016connection,fu2017learning} proposed adversarial IRL framework as an efficient sampling based approximation to the MaxEnt IRL, where the discriminator takes on a particular form:
$$
D_\omega(s,a) = \frac{\exp(f_\omega(s,a))}{\exp(f_\omega(s,a)) + q(a|s)}
$$
where $f_\omega(s,a)$ is the learned function, $q(a|s)$ is the probability of  the adaptive sampler pre-computed as an input to the discriminator,
and the policy is trained to maximize $\log D - \log(1-D)$.

To alleviate the reward shaping ambiguity \cite{ng1999policy}, where many reward functions can explain an optimal policy, 
\cite{fu2017learning} further restricted $f$ to a reward estimator $g_\omega$ and a potential shaping function $h_\phi$:
$$
f_{\omega, \phi} (s,a,s') = g_\omega(s,a) + \gamma h_\phi(s') - h_\phi(s)
$$
It has been shown that under 
suitable assumptions, 
$g_\omega$ and $h_\phi$ will recover the true reward and value function up to a constant.
\section{Method}


\subsection{Logistic Stochastic Best Response Equilibirum}\label{sec:lsbre}
To extend MaxEnt IRL to Markov games, we need be able to characterize the trajectory distribution induced by a set of (parameterized) reward functions $\{r_i(s, \va)\}_{i=1}^N$ (analogous to Equation~(\ref{eq:irl-mle})). However existing optimality notions introduced in Section~\ref{sec:optimality-notion} do not \textit{explicitly} define a tractable joint strategy profile that we can use to maximize the likelihood of expert demonstrations (as a function of the rewards); they do so \textit{implicitly} as the solution to a set of constraints.  

Motivated by Gibbs sampling \cite{hastings1970monte}, dependency networks \cite{heckerman2000dependency}, best response dynamic \cite{nisan2011best,gandhi2012stochastic} and LQRE, we propose a new solution concept that allows us to characterize rational (joint) policies induced from a set of reward functions. Intuitively, our solution concept corresponds to the result of repeatedly applying a stochastic (entropy-regularized) best response mechanism, where each agent (in turns) attempts to optimize its actions while keeping the other agents' actions fixed.  

To begin with, let us first consider a stateless single-shot normal-form game with $N$ players and a reward function $r_i: \mathcal{A}_1 \times \ldots \times \mathcal{A}_N \rightarrow \mathbb{R}$ for each player $i$. We consider the following \textit{Markov chain} over $(\mathcal{A}_1 \times \cdots \times \mathcal{A}_N)$, where the state of the Markov chain at step $k$ is denoted $\mathbf{z}^{(k)} = (z_1, \cdots, z_N)^{(k)}$,
with each random variable $z^{(k)}_i$ taking values in $\mathcal{A}_i$. The transition kernel of the Markov Chain is defined by the following equations:
\begin{align}
    \label{eq:conditionals}
    z^{(k+1)}_i \sim &~ P_i(a_i|\va_{-i}=\vz^{(k)}_{-i}) = \frac{\exp(\lambda r_i(a_i, \vz^{(k)}_{-i}))}{\sum_{a'_i} \exp(\lambda r_i(a'_i, \vz^{(k)}_{-i}))}
\end{align}
and each agent $i$ is updated in scan order. Given all other players' actions $\vz^{(k)}_{-i}$, the $i$-th player picks an action proportionally to $\exp(\lambda r_i(a_i, \vz^{(k)}_{-i}))$, where $\lambda > 0$ is a parameter that controls the level of rationality of the agents. For $\lambda \rightarrow 0$, the agent will select actions uniformly at random, while for $\lambda \rightarrow \infty$, the agent will select actions greedily (best response).
Because the Markov Chain is ergodic, it admits a unique stationary distribution which we denote $\vpi(\va)$. Interpreting this stationary distribution over $(\mathcal{A}_1 \times \cdots \times \mathcal{A}_N)$ as a policy, we call this stationary joint policy a \textit{logistic stochastic best response equilibrium} for normal-form games.

Now let us generalize the solution concept to Markov games. For each agent $i$, let $\{\pi^t_i\}_{t=1}^T$ denote a set of time-dependent policies. First we define the state action value function for each agent $i$. Starting from the base case:
\begin{align*}
    Q^{\vpi^\varnothing}_i(s^T, a^T_i, \va^T_{-i}) = r_i(s^T, a^T_i, \va^T_{-i})
\end{align*}
then we recursively define:
{\small
\begin{align*}
  Q_i^{\vpi^{t+1:T}} (s^t,a^t_i, &\va^t_{-i}) = r_i(s^t, a^t_i, \va_{-i}^{t}) +\\ &\mathbb{E}_{s^{t+1} \sim P(\cdot|s^t, \va^t)}
 \Big[\mathcal{H}(\pi^{t+1}_i(\cdot|s^{t+1})) + \\ & 
 \mathbb{E}_{\va^{t+1} \sim \vpi^{t+1}(\cdot|s^{t+1})}[ Q_i^{\vpi^{t+2:T}} (s^{t+1}, \va^{t+1})]\Big]
\end{align*}}
which generalizes the standard state-action value function in single-agent RL ($\va_{-i} = \varnothing$ when $N=1$).

\begin{definition}\label{def:lsbre}
Given a Markov game with horizon $T$, the logistic stochastic best response equilibrium (LSBRE) is a sequence of $T$ stochastic policies $\{\vpi^t\}_{t=1}^T$ constructed by the following process. Consider T Markov chains over $(\mathcal{A}_1 \times \cdots \mathcal{A}_N)^{|\mathcal{S}|}$, where the state of the  t-th Markov chain at step $k$ is $\{z^{t, (k)}_i: \mathcal{S} \rightarrow \mathcal{A}_i\}_{i=1}^N$, with each random variable $z_i^{t, (k)}(s)$ taking values in $\mathcal{A}_i$. For $t \in [T, \ldots, 1]$, we recursively define the the stationary joint distribution $\pi^t(\va|s)$ of the $t$-th Markov chain in terms of $\{\vpi^\ell\}_{\ell=t+1}^T$ as:

For $s^t \in \mathcal{S}, i \sim [1, \cdots, N]$, we update the state of the Markov chain as:
\begin{align}
  z^{t, (k+1)}_i(s^t) \sim  P^t_i(a^t_i|\va^t_{-i}=\vz^{t, (k)}_{-i}(s^t), s^t) =\nonumber\\
  \frac{ \exp\left( \lambda Q_i^{\vpi^{t+1:T}} (s^t, a^t_i, \vz^{t, (k)}_{-i}(s^t)) \right)}{\sum_{a'_{i}} \exp\left( \lambda Q_i^{\vpi^{t+1:T}} (s^t,a_i',\vz^{t, (k)}_{-i}(s^t)\right)}\label{eq:lsbre-conditional}
\end{align}
where parameter $\lambda \in \R^{+}$ controls the level of rationality of the agents, and $\{P^t_i\}_{i=1}^N$ specifies a set of 
conditional distributions.
LSBRE for Markov game is  the sequence of $T$ joint stochastic policies $\{\vpi^t\}_{t=1}^T$. Each joint policy $\vpi^t:\mathcal{S} \rightarrow \mathcal{P}(\mathcal{A}_1 \times \cdots  \times \mathcal{A}_N)$ is given by:
\begin{align}
    \vpi^t(a_1, \cdots, a_N|s^t) = P\left(\bigcap_i \{z^{t, (\infty)}_i(s^t) = a_i \} \right)\label{eq:lsbre-joint}
\end{align}

where the probability is taken with respect to the unique stationary distribution of the t-th Markov chain.
\end{definition}

When the set of conditionals
in Equation (\ref{eq:conditionals}) 
are compatible (in the sense that each conditional can be inferred from the \textit{same} joint distribution \cite{arnold1989compatible}), the above process corresponds to a \textit{Gibbs sampler}, which will converge to a stationary joint distribution $\vpi(\va)$, whose conditional distributions are consistent with the ones used during sampling, namely Equation (\ref{eq:conditionals}). This is the case, for example, if the agents are cooperative, \textit{i.e.}, they share the same reward function $r_i$. In general, $\vpi(\va)$ is the distribution specified by the dependency network~\cite{heckerman2000dependency} defined via conditionals in Equation (\ref{eq:conditionals}). The same argument can be made for the Markov Chains in Definition \ref{def:lsbre} with respect to the conditionals in Equation  (\ref{eq:lsbre-conditional}).

When the set of conditionals in Equation~(\ref{eq:conditionals}) and (\ref{eq:lsbre-conditional}) are incompatible, the procedure is called a \textit{pseudo Gibbs sampler}. As discussed in literatures on dependency networks~\citep{heckerman2000dependency,chen2011gibbs,chen2015behaviour}, when the conditionals are \emph{learned} from a sufficiently large dataset, the pseudo Gibbs sampler asymptotically works well in the sense that the conditionals of the stationary joint distribution are nearly consistent with the conditionals used during sampling. Under some conditions, theoretical bounds on the approximation can be obtained~\citep{heckerman2000dependency}.

\subsection{Trajectory Distributions Induced by LSBRE}
Following \cite{fu2017learning,levine2018reinforcement}, without loss of generality, in the remainder of this paper we consider the case where $\lambda=1$.
First, we note that there is a connection between the notion of LSBRE and maximum causal entropy reinforcement learning \cite{ziebart2010modeling}. Specifically, we can characterize the trajectory distribution induced by LSBRE policies with an energy-based formulation, where the probability of a trajectory increases exponentially as the sum of rewards increases.
Formally, with LSBRE policies, the probability of generating a certain trajectory can be characterized with the following theorem:
\begin{theorem}\label{the:traj}
 Given a joint policy $\{\vpi^t(\va^t|s^t)\}_{t=1}^T$ specified by LSBRE, for each agent $i$, let $\{\vpi^t_{-i}(\va^t_{-i}|s^t)\}_{t=1}^T $ denote other agents' marginal distribution and $\{\pi_i^t(a^t_{i}|\va^t_{-i},s^t)\}_{t=1}^T$ denote agent $i$'s conditional distribution, both obtained from the LSBRE joint policies. 
Then the LSBRE conditional distributions are the optimal solution to the following optimization problem:
\begin{align}
&~~~~~~~~~\min_{\hat{\pi}^{1:T}} \KL (\hat{p}(\tau)||\tilde{p}(\tau))\label{eq:mini-kl}\\
\hat{p}(\tau) = & \left[\eta(s^1) \cdot  \prod_{t=1}^T P(s^{t+1}|s^t,\va^t)  \cdot \vpi^t_{-i}(\va^t_{-i}|s^t)) \right] \cdot \nonumber\\
& \prod_{t=1}^T \hat{\pi}_i^t(a^t_{i}|\va^t_{-i},s^t) \nonumber\\
\tilde{p}(\tau) \propto & \left[\eta(s^1) \cdot  \prod_{t=1}^T P(s^{t+1}|s^t,\va^t)  \cdot \vpi^t_{-i}(\va^t_{-i}|s^t)) \right] \cdot \nonumber\\
  & \exp\left(\sum_{t=1}^T r_{i}(s^t, a^t_i, \va_{-i}^t)\right) \label{eq:lsbre-traj}
\end{align}

\end{theorem}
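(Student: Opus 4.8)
The plan is to recognize \eqref{eq:mini-kl} as the multi-agent analogue of the maximum-likelihood / maximum-causal-entropy equivalence behind \eqref{eq:irl-mle}: reduce the KL minimization to a finite-horizon soft optimal control problem for agent $i$ in which the dynamics $P$, the initial distribution $\eta$, and the other agents' marginals $\{\vpi_{-i}^t\}_{t=1}^T$ are absorbed into the environment, and then solve that problem by backward dynamic programming, matching the resulting soft Bellman recursion against the recursive definition of $Q_i^{\vpi^{t+1:T}}$.

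First I would expand $\KL(\hat{p}(\tau)\|\tilde{p}(\tau))$. The bracketed factor $\eta(s^1)\prod_{t=1}^T P(s^{t+1}|s^t,\va^t)\,\vpi_{-i}^t(\va_{-i}^t|s^t)$ is common to $\hat{p}$ and $\tilde{p}$, so it cancels in $\log(\hat{p}/\tilde{p})$; writing $\tilde{p}(\tau)=\tilde{p}_0(\tau)/Z$ with normalizing constant $Z$ (which does not depend on $\hat\pi^{1:T}$), this leaves
\begin{align*}
\KL(\hat{p}\|\tilde{p}) = \E_{\hat{p}}\Big[\textstyle\sum_{t=1}^T \log\hat\pi_i^t(a_i^t|\va_{-i}^t,s^t) - \sum_{t=1}^T r_i(s^t,a_i^t,\va_{-i}^t)\Big] + \log Z ,
\end{align*}
so minimizing over $\hat\pi^{1:T}$ is equivalent to maximizing $J(\hat\pi^{1:T}) = \E_{\hat{p}}[\sum_t r_i(s^t,\va^t)] + \sum_t \E_{\hat{p}}[-\log\hat\pi_i^t(a_i^t|\va_{-i}^t,s^t)]$, i.e.\ an entropy-regularized return for agent $i$ with everything else held fixed.

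I would then solve this by backward induction on $t=T,T-1,\dots,1$, with inductive claim: the optimal $\hat\pi_i^t(\cdot|\va_{-i}^t,s^t)$ is the Boltzmann policy $\propto\exp(Q_i^{\vpi^{t+1:T}}(s^t,\cdot,\va_{-i}^t))$ and the optimal value-to-go $V^t(s^t)$ equals $\E_{\va_{-i}^t\sim\vpi_{-i}^t(\cdot|s^t)}[\log\sum_{a_i}\exp Q_i^{\vpi^{t+1:T}}(s^t,a_i,\va_{-i}^t)]$. The base case $t=T$ uses $Q_i^{\vpi^\varnothing}=r_i$ together with the elementary variational identity $\max_q \big(\E_{a\sim q}[g(a)] + \mathcal{H}(q)\big)=\log\sum_a\exp g(a)$, attained at $q(a)\propto\exp g(a)$. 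For the inductive step, set $\tilde Q^t(s^t,\va^t):= r_i(s^t,\va^t)+\E_{s^{t+1}\sim P(\cdot|s^t,\va^t)}[V^{t+1}(s^{t+1})]$, apply the same identity to optimize $\hat\pi_i^t(\cdot|\va_{-i}^t,s^t)$, and average over $\va_{-i}^t\sim\vpi_{-i}^t$. The crux is to verify $\tilde Q^t = Q_i^{\vpi^{t+1:T}}$: substituting the inductive form of $V^{t+1}$ and using that the LSBRE conditional $\pi_i^{t+1}(a_i|\va_{-i},s)$ is itself the softmax of $Q_i^{\vpi^{t+2:T}}$, the quantity $\mathcal{H}(\pi_i^{t+1}(\cdot|s^{t+1})) + \E_{\va^{t+1}\sim\vpi^{t+1}}[Q_i^{\vpi^{t+2:T}}(s^{t+1},\va^{t+1})]$ collapses, action by action in $\va_{-i}^{t+1}$, to $\log\sum_{a_i}\exp Q_i^{\vpi^{t+2:T}}$, reproducing exactly the recursion defining $Q_i^{\vpi^{t+1:T}}$. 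Hence the optimal $\hat\pi_i^t$ equals $P_i^t$ of \eqref{eq:lsbre-conditional}, i.e.\ the LSBRE conditional, which closes the induction and the proof.

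The step I expect to be the main obstacle is this last identification of the optimal per-step conditional with the LSBRE conditional $\pi_i^t$: it requires the conditional of the stationary joint $\vpi^t$ to coincide with the softmax transition kernel $P_i^t$, which is exact when the conditionals \eqref{eq:lsbre-conditional} are compatible (e.g.\ cooperative rewards) and otherwise holds only in the pseudo-Gibbs approximate sense noted after Definition~\ref{def:lsbre}. Secondary care is needed to confirm that $Z$ and the environment factors are genuinely independent of $\hat\pi^{1:T}$ (so the reduction to soft control is exact), and that restricting the variational class to conditionals of the form $\hat\pi_i^t(a_i^t|\va_{-i}^t,s^t)$ loses no optimality, which follows from the Markov structure of $P$ and $r_i$.
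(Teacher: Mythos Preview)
Your proposal is correct and follows essentially the same route as the paper's proof: cancel the shared factors in the KL to reduce \eqref{eq:mini-kl} to an entropy-regularized return for agent $i$, then solve by backward dynamic programming using the softmax variational identity, matching the resulting recursion against the definition of $Q_i^{\vpi^{t+1:T}}$. Your caveat about identifying the conditional of the stationary joint $\vpi^t$ with the softmax kernel $P_i^t$ of \eqref{eq:lsbre-conditional} is well taken---the paper's proof also tacitly makes this identification, relying on the compatibility discussion following Definition~\ref{def:lsbre} rather than addressing it within the argument.
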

\begin{proof}
See Appendix~\ref{app:lsbre-traj}.
\end{proof}

Intuitively, for single-shot normal form games, the above statement holds obviously from the definition in Equation (\ref{eq:conditionals}). For Markov games, similar to the process introduced in Definition \ref{def:lsbre}, we can employ a dynamic programming algorithm to find the conditional policies which minimizes Equation (\ref{eq:mini-kl}). Specifically, we first construct the base case of $t=T$ as a normal form game, then recursively construct the conditional policy for each time step $t$, based on the policies from $t+1$ to $T$ that have already been constructed. It can be shown that the constructed optimal policy which minimizes the KL divergence between its trajectory distribution and the trajectory distribution defined in Equation (\ref{eq:lsbre-traj}) corresponds to the set of conditional policies in LSBRE.

\subsection{Multi-Agent Adversarial IRL}\label{sec:ma-airl}
In the remainder of this paper, we assume that the expert policies form a unique LSBRE under some unknown (parameterized) reward functions, according to Definition~\ref{def:lsbre}. 
By adopting LSBRE as the optimality notion, we are able to rationalize the demonstrations by maximizing the likelihood of the expert trajectories with respect to the LSBRE stationary distribution, which is in turn induced by the $\vomega$-parameterized reward functions $\{r_i(s, \va; \omega_i)\}_{i=1}^N$.

The probability of a trajectory $\tau = \{s_t, \va_t\}_{t=1}^T$ generated by LSBRE policies in a Markov game is defined by the following generative process:
\begin{equation}\label{eq:generative}
p(\tau) = \eta(s^1) \cdot \prod_{t=1}^T \vpi^t(\va^t|s^t; \vomega) \cdot \prod_{t=1}^T P(s^{t+1}|s^t,\va^t)
\end{equation}
where $\vpi^t(\va^t|s^t; \vomega)$ are the unique stationary joint distributions for the LSBRE induced by $\{r_i(s, \va; \omega_i)\}_{i=1}^N$.
 The initial state distribution $\eta(s^1)$ and transition dynamics $P(s^{t+1}|s^t,\va^t)$ are specified by the Markov game.

As mentioned in Section~\ref{sec:single-airl}, the MaxEnt IRL framework interprets finding suitable reward functions as maximum likelihood over the expert trajectories in the distribution defined in Equation (\ref{eq:generative}), which can be reduced to:
\begin{equation}\label{eq:mle}
    \max_\vomega \mathbb{E}_{\tau \sim \vpi_E}\left[\sum_{t=1}^T \log \vpi^t(\va^t|s^t; \vomega)\right]
\end{equation}
since the initial state distribution and transition dynamics do not depend on the parameterized rewards.

Note that $\vpi^t(\va^t|s^t)$ in Equation (\ref{eq:mle}) is the joint policy defined in Equation (\ref{eq:lsbre-joint}), whose conditional distributions are given by Equation (\ref{eq:lsbre-conditional}). From Section \ref{sec:lsbre}, we know that given a set of $\vomega$-parameterized reward functions, we are able to characterize the conditional policies $\{\pi^t_i(a_i^t| \va_{-i}^t, s^t)\}_{t=1}^T$ for each agent $i$.
However direct optimization over the joint MLE objective in Equation (\ref{eq:mle}) is intractable, as we cannot obtain a closed form for the stationary joint policy. Fortunately, we are able to construct an asymptotically consistent estimator by approximating the joint likelihood $\vpi^t(\va^t|s^t)$ with a product of the conditionals $\prod_{i=1}\pi_i^t(a^t_i|\va_{-i}^t, s^t)$, which is termed a \textit{pseudolikelihood} \cite{besag1975statistical}.

With the asymptotic consistency property of the \textit{maximum pseudolikelihood estimation} \cite{besag1975statistical, lehmann2006theory}, we have the following theorem:
\begin{theorem}\label{the:pseudo}
Let demonstrations $\tau_1, \ldots, \tau_M$ be independent and identically distributed (sampled from LSBRE induced by some unknown reward functions), and suppose that for all $t \in [1, \ldots, T], a^t_i \in \mathcal{A}_i$, $\pi^t_i(a^t_i|\va^t_{-i}, s^t; \omega_i)$ 
is differentiable with respect to $\omega_i$.
Then, with probability tending to $1$ as $M \to \infty$, the equation
\begin{equation}
    \frac{\partial}{\partial \vomega} \sum_{m=1}^M \sum_{t=1}^T \sum_{i=1}^N \log \pi^t_i(a^{m,t}_i|\va^{m,t}_{-i}, s^{m,t}; \omega_i) = 0
    \label{eq:pseudolikelihood-derivative}
\end{equation}
has a root $\Hat{\vomega}_M$ such that $\Hat{\vomega}_M$ tends to the maximizer of the joint likelihood in Equation~(\ref{eq:mle}).
\end{theorem}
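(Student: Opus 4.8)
The plan is to recognize Theorem~\ref{the:pseudo} as an instance of the classical consistency theory for maximum pseudolikelihood estimators \cite{besag1975statistical, lehmann2006theory}, with two things to verify beyond the generic argument: that the \emph{population} pseudolikelihood and the \emph{population} joint likelihood of Equation~(\ref{eq:mle}) are maximized at the same point, namely the true data-generating parameter $\vomega^\star$, and that the regularity hypotheses needed to extract a consistent root are met. Throughout, let $\rho^t_E$ denote the marginal law of $(s^t,\va^t)$ at step $t$ under the expert generative process~(\ref{eq:generative}), and write $\pi^{t,\star}_i(\cdot\,|\,\va^t_{-i},s^t) := \pi^t_i(\cdot\,|\,\va^t_{-i},s^t;\omega^\star_i)$ for the true LSBRE conditionals, which are strictly positive.

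First I would establish the two population identities via the information inequality. The limiting per-sample pseudolikelihood objective is $L_{\mathrm{PL}}(\vomega) = \sum_{t=1}^T\sum_{i=1}^N \E_{\rho^t_E}\!\big[\log\pi^t_i(a^t_i|\va^t_{-i},s^t;\omega_i)\big]$; taking the expectation over $a^t_i$ first and applying Gibbs' inequality gives
\begin{align*}
L_{\mathrm{PL}}(\vomega) = \mathrm{const} - \sum_{t,i}\E_{\rho^t_E}\Big[\KL\big(\pi^{t,\star}_i(\cdot|\va^t_{-i},s^t)\,\big\|\,\pi^t_i(\cdot|\va^t_{-i},s^t;\omega_i)\big)\Big],
\end{align*}
so $L_{\mathrm{PL}}$ is maximized precisely when every conditional is matched, in particular at $\vomega=\vomega^\star$. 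The identical computation applied to the joint policies shows the limiting objective of~(\ref{eq:mle}) equals a constant minus $\sum_t \E_{\rho^t_E}[\KL(\vpi^{t,\star}(\cdot|s^t)\,\|\,\vpi^t(\cdot|s^t;\vomega))]$, again maximized at $\vomega^\star$. Since a strictly positive joint distribution over $(\mathcal A_1\times\cdots\times\mathcal A_N)$ is uniquely determined by its full conditionals (Hammersley--Clifford / Brook's lemma), matching all conditionals forces matching the joint, so any maximizer of $L_{\mathrm{PL}}$ is a maximizer of the joint-likelihood objective; under the paper's standing assumption that the expert policies form a \emph{unique} LSBRE (identifiability of the rewards from the LSBRE policies), both maximizer sets collapse to the single point $\vomega^\star$.

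Next I would invoke the consistent-root (Cramér-type) argument. Because $\tau_1,\dots,\tau_M$ are i.i.d., the strong law of large numbers gives, for each fixed $\vomega$, that $\tfrac1M\sum_{m=1}^M\sum_{t=1}^T\sum_{i=1}^N\log\pi^t_i(a^{m,t}_i|\va^{m,t}_{-i},s^{m,t};\omega_i)\to L_{\mathrm{PL}}(\vomega)$ and, differentiating, that $\tfrac1M$ times the left-hand side of~(\ref{eq:pseudolikelihood-derivative}) converges to $\nabla L_{\mathrm{PL}}(\vomega)$; under the assumed differentiability in $\omega_i$ plus mild domination conditions this upgrades to uniform convergence on a compact neighborhood of $\vomega^\star$, and from the previous step $\nabla L_{\mathrm{PL}}(\vomega^\star)=0$ with $\vomega^\star$ a strict local maximizer. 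The standard consistency argument for the MLE then carries over verbatim to the pseudolikelihood score: for every small $\delta>0$, $L_{\mathrm{PL}}$ is strictly below $L_{\mathrm{PL}}(\vomega^\star)$ on the sphere $\{\|\vomega-\vomega^\star\|=\delta\}$, hence with probability tending to $1$ the empirical pseudolikelihood is too, so it attains an interior local maximum $\hat\vomega_M$ with $\|\hat\vomega_M-\vomega^\star\|<\delta$; this $\hat\vomega_M$ solves~(\ref{eq:pseudolikelihood-derivative}), and letting $\delta\downarrow 0$ yields $\hat\vomega_M\to\vomega^\star$ in probability. Combined with the first step, $\hat\vomega_M$ tends to the maximizer of the joint likelihood~(\ref{eq:mle}), which is the claim.

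The step I expect to be the main obstacle is the bridge between the two population objectives in the \emph{incompatible} (pseudo-Gibbs) regime. The conditionals entering~(\ref{eq:pseudolikelihood-derivative}) must be the true conditionals of the stationary LSBRE joint $\vpi^t(\cdot;\vomega)$ of Equation~(\ref{eq:lsbre-joint}), which in general differ from the sampling kernels $P^t_i$ of Equation~(\ref{eq:lsbre-conditional}); one must therefore argue --- in the spirit of the dependency-network approximation results cited after Definition~\ref{def:lsbre} \cite{heckerman2000dependency} --- that these stationary conditionals inherit differentiability in $\omega_i$ and strict positivity, so that matching them across all $i$ still pins down the joint and the maximizer of $L_{\mathrm{PL}}$ genuinely coincides with that of~(\ref{eq:mle}). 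Checking the remaining regularity conditions (interiority of $\vomega^\star$, the dominating function for the uniform law of large numbers) is routine given differentiability and the compactness of the action sets, and I would keep that part brief.
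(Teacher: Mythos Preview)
Your proposal is correct and follows the same skeleton as the paper's proof: both ultimately appeal to the classical consistency theory for maximum pseudolikelihood estimators \cite{besag1975statistical,lehmann2006theory}, after verifying that the true parameter $\vomega^\star$ is a critical point of the population pseudolikelihood. The paper carries this out by a direct computation of the empirical pseudolikelihood gradient (for the normal-form case, then by induction), letting $M\to\infty$ to replace the empirical distribution by $p(\cdot;\vomega^\star)$, and observing that the resulting expression
\[
\sum_{i}\sum_{\va_{-i}} p(\va_{-i};\vomega^\star)\sum_{a_i}\big(p(a_i\mid\va_{-i};\vomega^\star)-\pi_i(a_i\mid\va_{-i};\omega_i)\big)\,\tfrac{\partial}{\partial\vomega}r_i(a_i,\va_{-i};\omega_i)
\]
vanishes at $\vomega=\vomega^\star$. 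Your KL decomposition is the same fact in integrated form, so the core arguments coincide.

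Where you go beyond the paper is in making explicit the bridge from ``$\vomega^\star$ is a pseudolikelihood maximizer'' to ``$\vomega^\star$ is the joint-likelihood maximizer'' via Brook's lemma, and in flagging the incompatible (pseudo-Gibbs) regime as the delicate step. The paper's computation tacitly assumes that the conditionals $p(a_i\mid\va_{-i};\vomega^\star)$ of the stationary LSBRE joint coincide with the parameterized sampling kernels $\pi_i(\cdot\mid\va_{-i};\omega_i^\star)$ of Equation~(\ref{eq:lsbre-conditional}); this is exactly the compatibility hypothesis you identify, and the paper does not address it beyond the dependency-network approximation remarks following Definition~\ref{def:lsbre}. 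So your treatment is more careful on precisely the point where the paper is brief, but the overall route is the same.
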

\begin{proof}
See Appendix~\ref{app:pseudolikelihood}.
\end{proof}

Theorem~\ref{the:pseudo} bridges the gap between optimizing the joint likelihood and each conditional likelihood. 
Now we are able to maximize the objective in Equation (\ref{eq:mle}) as:
\begin{equation}\label{eq:policy-mle}
    \mathbb{E}_{\vpi_E} \left[\sum_{i=1}^N \sum_{t=1}^T \frac{\partial}{\partial \vomega} \log \pi^t_i(a^t_i|\va^t_{-i}, s^t; \omega_i) \right]
\end{equation}

To optimize the maximum pseudolikelihood objective in Equation~(\ref{eq:policy-mle}), we can instead optimize the following surrogate loss which is a variational approximation to the psuedolikelihood objective (from Theorem~\ref{the:traj}):
\begin{align*}
    \mathbb{E}_{\vpi_E} \left[\sum_{i=1}^N \sum_{t=1}^T \frac{\partial}{\partial \vomega} r_{i}(s^t, \va^t; \omega_i) \right] - \sum_{i=1}^N \frac{\partial}{\partial \vomega}  \log Z_{\omega_i}
\end{align*}
where $Z_{\omega_i}$ is the partition function of the distribution in Equation~(\ref{eq:lsbre-traj}).
It is generally intractable to exactly compute and optimize the partition function $Z_\omega$, which involves an integral over all trajectories. Similar to GCL \cite{finn2016guided} and single-agent AIRL \cite{fu2017learning}, we employ \textit{importance sampling} to estimate the partition function with adaptive samplers $\vq_\vtheta$. Now we are ready to introduce our practical Multi-Agent Adversarial IRL (MA-AIRL) framework, where we train the $\vomega$-parameterized discriminators as:
\begin{align}\label{eq:gan-discriminator}
\max_{\vomega}~ 
& \mathbb{E}_{\vpi_E} \left[\sum_{i=1}^N \log \frac{\exp(f_{\omega_i}(s, \va))}{\exp(f_{\omega_i}(s, \va)) + q_{\theta_i}(a_i|s)}\right] + \nonumber\\
& \mathbb{E}_{\vq_\vtheta} \left[\sum_{i=1}^N \log \frac{q_{\theta_i}(a_i|s)}{\exp(f_{\omega_i}(s, \va)) + q_{\theta_i}(a_i|s)}\right]
\end{align}
and we train the $\vtheta$-parameterized generators as:
\begin{align}\label{eq:generator-obj}
    \max_\theta~& \mathbb{E}_{\vq_\vtheta} \left[\sum_{i=1}^N \log(D_{\omega_i}(s, \va)) - \log(1-D_{\omega_i}(s,\va))\right] \nonumber\\
    = & \ \mathbb{E}_{\vq_\vtheta} \left[\sum_{i=1}^N f_{\omega_i}(s, \va) - \log(q_{\theta_i}(a_i | s))\right]
\end{align}
Specifically, for each agent $i$, we have a discriminator with a particular structure $\frac{\exp(f_{\omega})}{\exp(f_{\omega}) + q_\theta}$ for a binary classification,
and a generator as an adaptive importance sampler for estimating the partition function. Intuitively, $q_\theta$ is trained to minimize the KL divergence between its trajectory distribution and that induced by the reward functions, for reducing the variance of importance sampling, while $f_\omega$ in the discriminator is trained to estimate the reward function.
At optimality, $f_\omega$ will approximate the advantage function for the expert policy and $q_\theta$ will approximate the expert policy.

\subsection{Solving Reward Ambiguity in Multi-Agent IRL}
For single-agent reinforcement learning, \citeauthor{ng1999policy} shows that 
for any state-only potential function $\phi: \mathcal{S} \to \mathbb{R}$, potential-based reward shaping defined as:
\begin{equation}
    r'(s^t,a^t,s^{t+1}) = r(s^t, a^t, s^{t+1}) + \gamma \Phi(s^{t+1}) - \Phi(s^t) \nonumber
\end{equation}
is a necessary and sufficient condition to guarantee invariance of the optimal policy in both finite and infinite horizon MDPs. In other words, given a set of expert demonstrations, there is a class of reward functions, all of which can explain the demonstrated expert behaviors. Thus without further assumptions, it would be impossible to identify the ground-truth reward that induces the expert policy within this class. Similar issues also exist when we consider multi-agent scenarios. \citeauthor{devlin2011theoretical} show that in multi-agent systems, using the same reward shaping for one or more agents will not alter the set of Nash equilibria. It is possible to extend this result to other solution concepts such as CE and LSBRE. For example, in the case of LSBRE, after specifying the level of rationality $\lambda$, for any $\pi_i \neq \pi^\LSBRE_i$, we have:
\begin{equation}\label{eq:inequal}
    \E_{\pi^{\LSBRE}_i, \vpi^{\LSBRE}_{-i}}[r_i(s, \va)] \geq \E_{\pi_i, \vpi^{\LSBRE}_{-i}}[r_i(s, \va)]
\end{equation}
since each individual LSBRE conditional policy is the optimal solution to the corresponding entropy regularized RL problem (See Appendix~(\ref{app:lsbre-traj})). It can be also shown that any policy that satisfies the inequality in Equation~(\ref{eq:inequal}) will still satisfy the inequality after reward shaping \cite{devlin2011theoretical}.

To mitigate the reward shaping effect and recover reward functions with higher linear correlation to the ground truth reward, as in \cite{fu2017learning}, we further assume the functions $f_{\omega_i}$ in Equation~(\ref{eq:gan-discriminator}) have a specific structure:
\begin{equation}
    f_{\omega_i, \phi_i} (s^t,a^t,s^{t+1}) = g_{\omega_i}(s^t,a^t) + \gamma h_{\phi_i}(s^{t+1}) - h_{\phi_i}(s^t) \nonumber
\end{equation}
where $g_\omega$ is a reward estimator and  $h_\phi$ is a potential function. We summarize the MA-AIRL training procedure in Algorithm~\ref{alg:mairl}. 

\begin{algorithm}[t]
   \caption{Multi-Agent Adversarial IRL}
   \label{alg:mairl}
\begin{algorithmic}
   \STATE {\bfseries Input:} Expert trajectories $\mathcal{D}_E = \{\tau^E_j\}$; Markov game as a black box with parameters $(N, \bm{\mathcal{S}}, \bm{\mathcal{A}}, \eta, \bm{P}, \gamma)$
   \STATE Initialize the parameters of policies $\vq$, reward estimators $\vg$ and potential functions $\vh$ with $\vtheta, \vomega, \vphi$. 
   \REPEAT
   \STATE Sample trajectories $\mathcal{D}_\pi = \{\tau_j\}$ from $\bm{\pi}$:
   \STATE ~~~$s^1 \sim \eta(s),  \va^t \sim \bm{\pi}(\va^t|s^t), s^{t+1} \sim P(s^{t+1}|s^t, \va^t)$
   \STATE Sample state-action pairs $\mathcal{X}_\pi$, $\mathcal{X}_E$ from $\mathcal{D}_\pi$, $\mathcal{D}_E$.
   \FOR{$i= 1, \dots$, N}
   \STATE Update
   $\omega_i$, $\phi_i$ to increase the objective in Eq.~\ref{eq:gan-discriminator}:
   \STATE {\small $\mathbb{E}_{\mathcal{X}_E} [\log D(s, a_i, s')] + \mathbb{E}_{\mathcal{X}_\pi} [\log(1-D(s, a_i, s'))]$}
   \ENDFOR
   \FOR{$i= 1, \dots$, N}
   \STATE Update reward estimates $\hat{r}_i(s, a_i, s')$ with $g_{\omega_i}(s, a_i)$.
   or ($\log D(s, a_i, s') - \log(1-D(s, a_i, s'))$)
   \STATE Update $\theta_i$ with respect to $\hat{r}_i(s, a_i, s')$.
   \ENDFOR
   \UNTIL{Convergence}
   \STATE {\bfseries Output:} Learned policies $\vpi_\vtheta$ and reward functions $\vg_\vomega$.
\end{algorithmic}
\end{algorithm}

\section{Related Work}
A vast number of methods and paradigms have been proposed for single-agent imitation learning and inverse reinforcement learning.
However, multi-agent scenarios are less commonly investigated, and most existing works assume specific reward structures. These include fully cooperative games~\cite{barrett2017making,le2017coordinated,vsovsic2017inverse,bogert2014multi}, two player zero-sum games~\citep{lin2014multi}, and rewards as linear combinations of pre-specified features~\citep{reddy2012inverse,waugh2013computational}. Recently, \citeauthor{song2018multi} proposed MA-GAIL, a multi-agent extension of GAIL which works on general Markov games.

While both MA-AIRL and MA-GAIL are based on adversarial training, the methods are inherently different. MA-GAIL is based on the notion of Nash equilibrium, and is motivated via a specific choice of Lagrange multipliers for a constraint optimization problem.
MA-AIRL, on the other hand, is derived from MaxEnt RL and LSBRE, and aims to obtain an MLE solution for the joint trajectories; we connect this with a set of conditionals via pseudolikelihood, which are then solved with the adversarial reward learning framework. From a reward learning perspective, the discriminators' outputs in MA-GAIL will converge to uninformative uniform distribution, while MA-AIRL allows us to recover reward functions from the optimal discriminators.

\section{Experiments}
We seek to answer the following questions via empirical evaluation:
(1) Can MA-AIRL efficiently recover the expert policies for each individual agent from the expert demonstrations (policy imitation)?
(2) Can MA-AIRL effectively recover the underlying reward functions, for which the expert policies form a LSBRE (reward recovery)? 

\textbf{Task Description}~~~To answer these questions, we evaluate our MA-AIRL algorithm on  a series of simulated particle environments~\citep{lowe2017multi}. Specifically, we consider the following scenarios: \textit{cooperative navigation}, where three agents cooperate through physical actions to reach three landmarks; \textit{cooperative communication}, where two agents, a speaker and a listener, cooperate to navigate to a particular landmark; and \textit{competitive keep-away}, where one agent tries to reach a target landmark, while an adversary, without knowing the target a priori, tries to infer the target from the agent's behaviors and prevent it from reaching the goal through physical interactions. 

In our experiments, for generality, the learning algorithms will not leverage any prior knowledge on the types of interactions (cooperative or competitive). Thus for all the tasks described above, the learning algorithms will take a decentralized form and we will not utilize additional reward regularization, besides penalizing the $\ell$2 norm of the reward parameters to mitigate overfitting \cite{ziebart2010modeling, kalakrishnan2013learning}.

\textbf{Training Procedure}~~~In the simulated environments, we have access to the ground-truth reward functions, which enables us to accurately evaluate the quality of both recovered policies and reward functions. We use a multi-agent version of ACKTR~\citep{wu2017scalable,song2018multi}, an efficient model-free policy gradient algorithm for training the experts as well as the adaptive samplers in MA-AIRL. The supervision signals for the experts come from the ground-truth rewards, while the reward signals for the adaptive samplers come from the discriminators. Specifically, we first obtain expert policies induced by the ground-truth rewards, then we use them to generate demonstrations, from which the learning algorithms will try to recover the policies as well as the underlying reward functions. We compare MA-AIRL against the state-of-the-art multi-agent imitation learning algorithm, MA-GAIL \cite{song2018multi}, which is a generalization of GAIL to Markov games. Following \cite{li2017infogail, song2018multi}, we use behavior cloning to pretrain MA-AIRL and MA-GAIL to reduce sample complexity for exploration, and we use 200 episodes of expert demonstrations, each with 50 time steps, which is close to the amount of time steps used in \cite{ho2016generative}\footnote{The codebase for this work can be found at \url{https://github.com/ermongroup/MA-AIRL}.}.

\subsection{Policy Imitation}

Although MA-GAIL achieved superior performance compared with behavior cloning \cite{song2018multi}, it only aims to recover policies via distribution matching.
Moreover, the training signal for the policy will become less informative as training progresses; according to~\citep{goodfellow2014generative} with infinite data and computational resources the discriminator outputs will converge to 0.5 for all state-action pairs, which could potentially hinder the robustness of the policy towards the end of training. 
To empirically verify our claims, we compare the quality of the learned policies in terms of the expected return received by each agent. 

In the cooperative environment, we directly use the ground-truth rewards from the environment as the oracle metric, since all agents share the same reward. In the competitive environment, we follow the evaluation procedure in~\citep{song2018multi}, where we place the experts and learned policies in the same environment. A learned policy is considered ``better'' if it receives a higher expected return while its opponent receives a lower expected return. The results for cooperative and competitive environments are shown in Tables~\ref{tab:cooperative-rews} and~\ref{tab:competitive-rews} respectively. MA-AIRL consistently performs better than MA-GAIL in terms of the received reward in all the considered environments, suggesting superior imitation learning capabilities to the experts. 

\begin{table}[t]
\centering
\caption{Expected returns in cooperative tasks. Mean and variance are taken across different random seeds used to train the policies.}
\vspace{5pt}
\label{tab:cooperative-rews}
\begin{tabular}{c|c|c}
\toprule
Algorithm & Nav. ExpRet & Comm. ExpRet\\
\midrule
Expert & -43.195 $\pm$ 2.659 & -12.712 $\pm$ 1.613\\ 

Random & -391.314 $\pm$ 10.092 & -125.825 $\pm$ 3.4906\\
\midrule
MA-GAIL & -52.810 $\pm$ 2.981 & -12.811 $\pm$ 1.604\\

MA-AIRL & \textbf{-47.515} $\pm$ 2.549 & \textbf{-12.727} $\pm$ 1.557\\
\bottomrule
\end{tabular}
\vspace{-12pt}
\end{table}

\begin{table}[t]
\centering
\caption{Expected returns of the agents in competitive task. 
Agent~\#1 represents the agent trying to reach the target and Agent~\#2 represents the adversary. Mean and variance are taken across different random seeds.
}
\vspace{5pt}
\label{tab:competitive-rews}
\begin{tabular}{c|c|c}
\toprule
Agent~\#1 & Agent~\#2 & Agent~\#1 ExpRet\\
\midrule
Expert & Expert & -6.804 $\pm$ 0.316\\
\midrule
MA-GAIL & Expert & -6.978 $\pm$ 0.305\\

MA-AIRL & Expert & \textbf{-6.785} $\pm$ 0.312\\
\midrule
Expert & MA-GAIL & -6.919 $\pm$ 0.298\\

Expert & MA-AIRL & \textbf{-7.367}$\pm$ 0.311\\
\bottomrule
\end{tabular}
\vspace{-12pt}
\end{table}

\subsection{Reward Recovery}

The second question we seek to answer is concerned with the reward recovering problem as in inverse reinforcement learning: is the algorithm able to recover the ground truth reward functions with expert demonstrations being the only source of supervision? To answer this question, we evaluate the statistical correlations between the ground truth rewards (which the learning algorithms have no access to) and the inferred rewards for the same state-action pairs. 

Specifically, we consider two types of statistical correlations:
\textit{Pearson's correlation coefficient} (PCC), which measures the linear correlation between two random variables; and \textit{Spearman's rank correlation coefficient} (SCC), which measures the statistical dependence between the rankings of two random variables. 
Higher SCC suggests that two reward functions have higher monotonic relationships 
and higher PCC suggests higher linear correlations. For each trajectory, we compare the ground-truth 
return from the environment with the supervision signals from the discriminators, which correspond to $g_\omega$ in MA-AIRL and $\log(D_\omega)$ in MA-GAIL.

Tables~\ref{tab:cooperative-correlation} and~\ref{tab:competitive-correlation} provide the SCC and PCC statistics for cooperative and competitive environments respectively. In the cooperative case, compared to MA-GAIL, MA-AIRL achieves a much higher PCC and SCC, which could facilitate policy learning. The statistical correlations between reward signals gathered from discriminators for each agent are also quite high, suggesting that while we do not reveal the agents are cooperative, MA-AIRL is able to discover high correlations between the agents' reward functions. In the competitive case, the reward functions learned by MA-AIRL also significantly outperform MA-GAIL in terms of SCC and PCC statistics. In Figure~\ref{fig:pcc}, we further show the changes of PCC statistics with respect to training time steps for MA-GAIL and MA-AIRL. The reward functions recovered by MA-GAIL initially have a high correlation with the ground truth, yet that dramatically decreases as training continues, whereas the functions learned by MA-AIRL maintains a high correlation throughout the course of training, which is in line with the theoretical analysis that in MA-GAIL, reward signals from the discriminators will become less informative towards convergence.

\begin{table}[t]
\centering
\caption{Statistical correlations between the learned reward functions and the ground-truth rewards in cooperative tasks. Mean and variance are taken across $N$ independently learned reward functions for $N$ agents.}
\vspace{5pt}
\label{tab:cooperative-correlation}
\begin{tabular}{c|c|c|c}
\toprule
  Task & Metric & MA-GAIL & MA-AIRL \\ \midrule

\multirow{2}{*}{\shortstack{Nav. \tiny}} & SCC & 0.792 $\pm$ 0.085 & \textbf{0.934} $\pm$ 0.015\\

& PCC & 0.556 $\pm$ 0.081 & \textbf{0.882} $\pm$ 0.028\\

\midrule
\multirow{2}{*}{\shortstack{Comm. \tiny}} & SCC & 0.879 $\pm$ 0.059 & \textbf{0.936} $\pm$ 0.080\\

& PCC & 0.612 $\pm$ 0.093 & \textbf{0.848} $\pm$ 0.099\\
\bottomrule
\end{tabular}
\vspace{-12pt}
\end{table}

\begin{table}[t]
\centering
\caption{Statistical correlations between the learned reward functions and the ground-truth rewards in competitive task.} 
\vspace{5pt}
\label{tab:competitive-correlation}
\begin{tabular}{c|cc}
\toprule
Algorithm & MA-GAIL & MA-AIRL\\
\midrule
SCC \#1 & 0.424 & \textbf{0.534}\\

SCC \#2 & 0.653 & \textbf{0.907}\\

\midrule
Average SCC & 0.538 & \textbf{0.721}\\

\midrule
PCC \#1 & 0.497 & \textbf{0.720}\\

PCC \#2 & 0.392 & \textbf{0.667}\\
\midrule
Average PCC & 0.445 & \textbf{0.694}\\
\bottomrule
\end{tabular}
\vspace{-12pt}
\end{table}

\begin{figure}[th]
\centering
\includegraphics[width=0.4\textwidth]{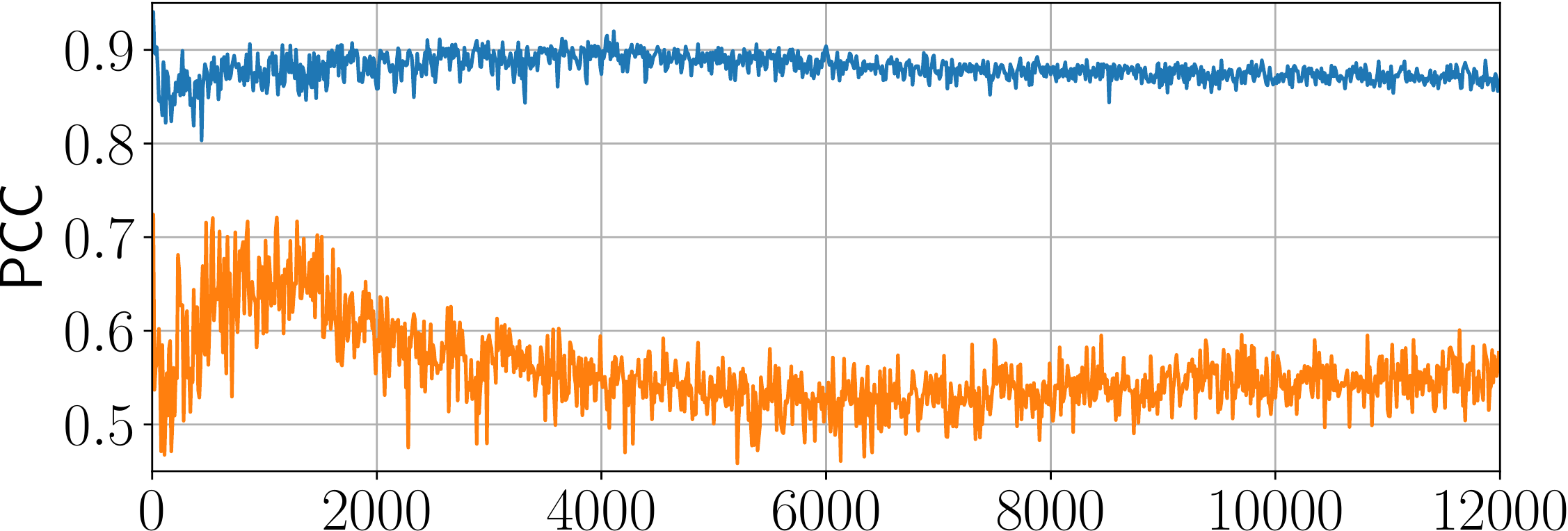}
\vspace{-5pt}
\caption{PCC w.r.t. the training epochs in cooperative navigation, with MA-AIRL (blue) and MA-GAIL (orange).}\label{fig:pcc}
\vspace{-8pt}
\end{figure}

\section{Discussion and Future Work}
We propose MA-AIRL, the first multi-agent MaxEnt IRL framework that is effective and scalable to Markov games with high-dimensional state-action space and unknown dynamics. We derive our algorithm based on a solution concept termed LSBRE and we employ maximum pseudolikelihood estimation to achieve tractability.
Experimental
results demonstrate that MA-AIRL is able to imitate expert behaviors in high-dimensional complex environments, as well as learn reward functions that are highly correlated with the ground truth rewards. An exciting avenue for future work is to include reward regularization to mitigate overfitting and leverage prior knowledge of the task structure.

\section*{Acknowledgments}
This research was supported by Toyota Research Institute, NSF (\#1651565, \#1522054, \#1733686), ONR (N00014-19-1-2145), AFOSR (FA9550-19-1-0024), Amazon AWS, and Qualcomm.

\bibliography{ref}
\bibliographystyle{icml2019}

\appendix
\onecolumn
\section{Appendix}
\subsection{Trajectory Distribution Induced by Logistic Stochastic Best Response Equilibrium}\label{app:lsbre-traj}

Let $\{\vpi^t_{-i}(\va^t_{-i}|s^t)\}_{t=1}^T$ denote other agents' marginal LSBRE policies, and $\{\hat{\pi}^t_i (a_i^t|\va_{-i}^t, s^t)\}_{t=1}^T$ denote agent $i$'s conditional policy. With chain rule, the induced trajectory distribution is given by:
\begin{align}
\hat{p}(\tau) = \left[\eta(s^1) \cdot  \prod_{t=1}^T P(s^{t+1}|s^t,\va^t)  \cdot \vpi^t_{-i}(\va^t_{-i}|s^t)) \right] \cdot \prod_{t=1}^T \hat{\pi}^t_i (a^t_i|\va^t_{-i}, s^t)
\end{align}
Suppose the desired distribution is given by:
\begin{align}
p(\tau) \propto \left[\eta(s^1) \cdot  \prod_{t=1}^T P(s^{t+1}|s^t,\va^t)  \cdot \vpi^t_{-i}(\va^t_{-i}|s^t)) \right] \cdot \exp\left(\sum_{t=1}^T r_{i}(s^t, a^t_i, \va_{-i}^t)\right)\label{eq:desired}
\end{align}

Now we will shown that the optimal solution to the following optimization problem correspond to the LSBRE conditional policies:
\begin{align}
    &\min_{\hat{\pi}_i^{1:T}} \KL(\hat{p}(\tau)|| p(\tau)) \label{eq:proof-optimization}
\end{align}

The optimization problem in Equation~(\ref{eq:proof-optimization}) is equivalent to (the partition function of the desired distribution is a constant with respect to optimized policies):
\begin{align}
    \max_{\hat{\pi}_i^{1:T}} ~& \mathbb{E}_{\tau \sim \hat{p}(\tau)} \left[ \log \eta(s^1) + \sum_{t=1}^T (\log P(s^{t+1}|s^t,\va^t) + \log \vpi^t_{-i}(\va^t_{-i}|s^t) + r_i(s^t, \va^t)) - \right. \nonumber\\
    &~~~~~~~~~~~~~~~~~\left. \log \eta(s^1) - \sum_{t=1}^T (\log P(s^{t+1}|s^t,\va^t) + \log \vpi^t_{-i}(\va^t_{-i}|s^t) + \log \hat{\pi}^t_i (a^t_i|\va^t_{-i}, s^t)) \right] \nonumber\\
    =~& \mathbb{E}_{\tau \sim \hat{p}(\tau)} \left[ \sum_{t=1}^T r_i(s^t, \va^t) - \log \hat{\pi}^t_i (a^t_i|\va^t_{-i}, s^t) \right] 
    = \sum_{t=1}^T \mathbb{E}_{(s^t, \va^t) \sim \hat{p}(s^t, \va^t)} [r_i(s^t, \va^t) - \log \hat{\pi}^t_i (a^t_i|\va^t_{-i}, s^t)]
\end{align}
To maximize this objective, we can use a dynamic programming procedure. Let us first consider the base case of optimizing $\hat{\pi}^T_i (a_i^T|\va_{-i}^T, s^T)$:
\begin{align}\label{eq:base-case-kl}
    & \mathbb{E}_{(s^T, \va^T) \sim \hat{p}(s^T, \va^T)} [r_i(s^T, \va^T) - \log \hat{\pi}^T_i (a_i^T|\va_{-i}^T)] = \nonumber\\
    & \mathbb{E}_{s^T \sim \hat{p}(s^T), \va^T_{-i} \sim \vpi^T_{-i}(\cdot|s^T)} \left[- \KL \left(\hat{\pi}^T_i(a_i^T|\va_{-i}^T, s^T) || \frac{\exp(r_i(s^T, a_i^T, \va_{-i}^T))}{\exp(V_i(s^T, \va_{-i}^T))}\right) + V_i(s^T, \va_{-i}^T) \right]
\end{align}
where $\exp(V_i(s^T,\va_{-i}^T))$ is the partition function and $V_i(s^T,\va_{-i}^T) = \log \sum_{a'_i} \exp(r_i(s^T, a'_i, \va_{-i}^T))$. The optimal policy is given by:
\begin{align}\label{eq:base-case}
    \pi^T_i(a^T_i|\va^T_{-i}, s^T) = \exp(r_i(s^T, a^T_i, \va^T_{-i}) - V_i(s^T,\va_{-i}^T))
\end{align}
With the optimal policy in Equation~(\ref{eq:base-case}), Equation~(\ref{eq:base-case-kl}) is equivalent to (with the KL divergence being zero):
\begin{align}
    \mathbb{E}_{(s^T, \va^T) \sim \hat{p}(s^T, \va^T)} [r_i(s^T, \va^T) - \log \hat{\pi}^T_i (a_i^T|\va_{-i}^T)] = \mathbb{E}_{s^T \sim \hat{p}(s^T), \va^T_{-i} \sim \vpi^T_{-i}(\cdot|s^T)} [V_i(s^T, \va_{-i}^T)]
\end{align}

Then recursively, for a given time step $t$, $\hat{\pi}^t_i (a_i^t|\va_{-i}^t, s^t)$ must maximize:
\begin{align}
    &\mathbb{E}_{(s^t, \va^t) \sim \hat{p}(s^t, \va^t)}\left[r_i(s^t, \va^t) - \log \hat{\pi}^t_i (a_i^t|\va_{-i}^t) + \mathbb{E}_{s^{t+1} \sim P(\cdot|s^t, \va^t), \va_{-i}^{t+1} \sim \vpi_{-i}^{t+1}(\cdot|s^{t+1})}[V_i^{\vpi^{t+2:T}}(s^{t+1}, \va_{-i}^{t+1})]\right]= \\
    &\mathbb{E}_{s^t \sim \hat{p}(s^t), \va^t_{-i} \sim \vpi^t_{-i}(\cdot|s^t)} \left[ -\KL \left(\hat{\pi}^t_i(a_i^t|\va_{-i}^t, s^t) || \frac{\exp(Q_i^{\vpi^{t+1:T}}(s^t, a_i^t, \va_{-i}^t))}{\exp(V_i^{\vpi^{t+1:T}}(s^t, \va_{-i}^t))}\right) + V_i^{\vpi^{t+1:T}}(s^t, \va^t_{-i}) \right]\label{eq:intermediate-obj}
\end{align}
where we define:
\begin{align}
    Q_i^{\vpi^{t+1:T}}(s^t, \va^t) &= r_i(s^t, \va^t) + \mathbb{E}_{s^{t+1}\sim p(\cdot|s^t, \va^t)} \left[ \mathcal{H}(\pi_i^{t+1}(\cdot|s^{t+1})) + \mathbb{E}_{\va_{-i}^{t+1} \sim \vpi_{-i}^{t+1}(\cdot|s^{t+1})}[V_i(s^{t+1}, \va_{-i}^{t+1})] \right]\\
    V_i^{\vpi^{t+1:T}}(s^t, \va_{-i}^t) &= \log \sum_{a'_i} \exp(Q_i^{\vpi^{t+1:T}}(s^t, a'_i, \va_{-i}^t))
\end{align}
The optimal policy to Equation~(\ref{eq:intermediate-obj}) is given by:
\begin{align}
    \pi^t_i(a_i^t|\va_{-i}^t, s^t) = \exp(Q_i^{\vpi^{t+1:T}}(s^t, \va^t) - V_i^{\vpi^{t+1:T}}(s^t, \va_{-i}^t))
\end{align}
which is exactly the set of conditional distributions used to produce LSBRE (Definition~\ref{def:lsbre}).

\subsection{Maximum Pseudolikelihood Estimation for LSBRE}\label{app:pseudolikelihood}
Theorem~\ref{the:pseudo} strictly follows the asymptotic consistency property of maximum pseudolikelihood estimation \cite{lehmann2006theory,dawid2014theory}. For simplicity, we will show the proof for normal form games and similar to Appendix~\ref{app:lsbre-traj}, the extension to Markov games can be proved by induction.

Consider a normal form game with $N$ players and reward functions $\{r_i(\va; \omega_i)\}_{i=1}^N$. Suppose the expert demonstrations $\mathcal{D} = \{(a_1, \ldots, a_N)^m\}_{m=1}^M$ are generated by $\vpi(\va; \vomega^*)$, where $\vomega^*$ denotes the true value of the parameters. The pseudolikelihood objective we want to maximize is given by:
\begin{align}
\ell_\PL (\vomega) &= \frac{1}{M} \sum_{m=1}^M \sum_{i=1}^N \log \pi_i(a_i^m|\va_{-i}^m; \omega_i)= \frac{1}{M} \sum_{m=1}^M \sum_{i=1}^N \log \frac{\exp(r_i(a_i^m, \va_{-i}^m; \omega_i))}{\sum_{a'_i} \exp(r_i(a'_i, \va_{-i}^m; \omega_i))}\\
&= \frac{1}{M} \sum_{m=1}^M \sum_{i=1}^N r_i(a_i^m, \va^m_{-i}; \omega_i) - \frac{1}{M} \sum_{m=1}^M \sum_{i=1}^N \log Z(\va^m_{-i}; \omega_i)\\
&= \sum_{i=1}^N \sum_{\va} p_\mathcal{D} (\va) r_i(a_i, \va_{-i}; \omega_i) - \sum_{i=1}^N \sum_{\va_{-i}} p_\mathcal{D} (\va_{-i}) \log Z(\va_{-i}; \omega_i)
\end{align}
where $p_\mathcal{D}$ is the empirical data distribution and $Z(\va_{-i}; \omega_i)$ is the partition function.

Take derivatives of $\ell_\PL (\vomega)$:
\begin{align}
    \frac{\partial}{\partial \vomega} \ell_\PL (\vomega) &= \sum_{i=1}^N \sum_{\va} p_\mathcal{D} (\va) \frac{\partial}{\partial \vomega} r_i(a_i, \va_{-i}; \omega_i) - \sum_{i=1}^N \sum_{\va_{-i}} p_\mathcal{D} (\va_{-i}) \frac{1}{Z(\va_{-i}; \omega_i)} \frac{\partial}{\partial \vomega} Z(\va_{-i}; \omega_i) \\
    &= \sum_{i=1}^N \sum_{\va} p_\mathcal{D} (\va) \frac{\partial}{\partial \vomega} r_i(a_i, \va_{-i}; \omega_i) - \sum_{i=1}^N \sum_{\va_{-i}} p_\mathcal{D} (\va_{-i}) \sum_{a_i} \frac{\exp(r_i(a_i, \va_{-i}; \omega_i))}{Z(\va_{-i}; \omega_i)} \frac{\partial}{\partial \vomega} r_i(a_i, \va_{-i}; \omega_i) \\
    &= \sum_{i=1}^N \sum_{\va} p_\mathcal{D} (\va) \frac{\partial}{\partial \vomega} r_i(a_i, \va_{-i}; \omega_i) - \sum_{i=1}^N \sum_{\va_{-i}} p_\mathcal{D} (\va_{-i}) \sum_{a_i} \pi_i(a_i|\va_{-i}; \omega_i) \frac{\partial}{\partial \vomega} r_i(a_i, \va_{-i}; \omega_i) \label{eq:pseudo-gradient}
\end{align}
When the sample size $m \to \infty$, Equation~(\ref{eq:pseudo-gradient}) is equivalent to:
\begin{align}
\frac{\partial}{\partial \vomega} \ell_\PL (\vomega) &= \sum_{i=1}^N \sum_{\va} p (\va; \vomega^*) \frac{\partial}{\partial \vomega} r_i(a_i, \va_{-i}; \omega_i) - \sum_{i=1}^N \sum_{\va_{-i}} p (\va_{-i}; \vomega^*) \sum_{a_i} \pi_i(a_i|\va_{-i}; \omega_i) \frac{\partial}{\partial \vomega} r_i(a_i, \va_{-i}; \omega_i)\\
&= \sum_{i=1}^N \sum_{\va_{-i}} p (\va_{-i}; \vomega^*) \sum_{a_i} (p(a_i|\va_{-i}; \vomega^*) - \pi_i(a_i|\va_{-i}; \omega_i)) \frac{\partial}{\partial \vomega} r_i(a_i, \va_{-i}; \omega_i)\label{eq:pseudo-final}
\end{align}

When $\vomega = \vomega^*$, the gradients in Equation~(\ref{eq:pseudo-final}) will be zero.

\end{document}